\documentclass[journal]{IEEEtran}
\usepackage{graphicx}
\usepackage[utf8]{inputenc}
\usepackage{graphics} 
\usepackage{epsfig} 
\usepackage{mathptmx} 
\usepackage{amsmath} 
\usepackage{amssymb}  
\usepackage{cite}
\usepackage{physics}
\usepackage{multicol}
\usepackage{mathtools, cuted}
\usepackage[cmintegrals]{newtxmath}
\usepackage{epstopdf}
\usepackage{graphics} 
\usepackage{epsfig} 

\usepackage{mathptmx} 
\usepackage{amsmath} 
\usepackage{amssymb}  
\usepackage{cite}
\usepackage{multicol}
\usepackage{mathtools, cuted}
\usepackage[cmintegrals]{newtxmath}
\usepackage{mathtools}

\usepackage{mathptmx}
\usepackage{hyperref}
\usepackage{helvet}
\usepackage{courier}
\usepackage{subfigure}
\usepackage{type1cm}
\usepackage{titlesec}
\usepackage{epsfig} 
\usepackage{mathptmx} 
\usepackage{amsmath} 
\usepackage{amssymb}  
\usepackage{cite}
\usepackage{multicol}
\usepackage{mathtools, cuted}
\usepackage[cmintegrals]{newtxmath}
\usepackage{makeidx}         
\usepackage{algorithm}
\usepackage{algpseudocode}

\usepackage{multicol}        
\usepackage[bottom]{footmisc}
\usepackage{caption}
\usepackage{nomencl}
\usepackage[usenames, dvipsnames]{color}
\usepackage{url}
\usepackage{siunitx}
\usepackage[utf8]{inputenc}
\usepackage{url}

\usepackage{multicol}        
\usepackage[bottom]{footmisc}
\usepackage{caption}
\usepackage{titlesec}
\usepackage{nomencl}
\usepackage[usenames, dvipsnames]{color}
\usepackage{multirow}
\usepackage{adjustbox}
\usepackage{amsthm}
 
\theoremstyle{definition}

\newtheorem{theorem}{Theorem}

\theoremstyle{remark}

\newtheorem{assumption}{Assumption}
\ifCLASSINFOpdf
\else
\fi
\begin{document}
%
\title{Affine Transformable Unmanned Ground Vehicle}
%
%
%

\author{Aron Mathias, Mohammad Ghufran, Jack Hughes, 
        and~Hossein Rastgoftar
\thanks{Authors are with the Aerospace and Mechanical Engineering, University of Arizona, Tucson, Arizona 85721 Emails: \{aronmathias, ghufran1942, jath03, hrastgoftar\}@arizona.edu}
}

\maketitle

\begin{abstract}
This paper develops the proof of concept for a novel affine transformable unmanned ground vehicle (ATUGV) with the capability of safe and aggressive deformation while carrying multiple payloads. The ATUGV is a multi-body system with mobile robots that can be used to power the ATUGV morphable motion, powered cells to enclose the mobile robots, unpowered cells to contain payloads, and a deformable structure to integrate cells through bars and joints. The paper's main objective is that all powered and unpowered cells' motion can safely track a desired affine transformation, where an affine transformation can be decomposed into translation, rigid body rotation, and deformation. To this end, the paper first uses a deep neural network to structure cell interconnection in such a way that every cell can freely move over the deformation plane, and the entire structure can reconfigurably deform to track a desired affine transformation. Then, the mobile robots, contained by the powered cells and stepper motors, regulating the connections of the powered and unpowered cells, design the proper controls so that all cells safely track the desired affine transformation. The functionality of the proposed ATUGV is validated through hardware experimentation and simulation.

\end{abstract}

\begin{IEEEkeywords}
Multi-Robot System, Robot Operation System, Affine Transformation
\end{IEEEkeywords}

%
\IEEEpeerreviewmaketitle

\section{Introduction}
Unmanned Ground Vehicles (UGVs) have become increasingly important across various sectors due to their ability to perform risky tasks without human interventions, while they can potentially offer the highest level of autonomy as the result of advancements in artificial intelligence. 
This paper presents a proof of concept for a novel morphable UGV that can deformably pass through constrained environmentdsf while it can carry multiple payloads.


%
\begin{figure}[ht]
\centering
\includegraphics[width=0.48 \textwidth]{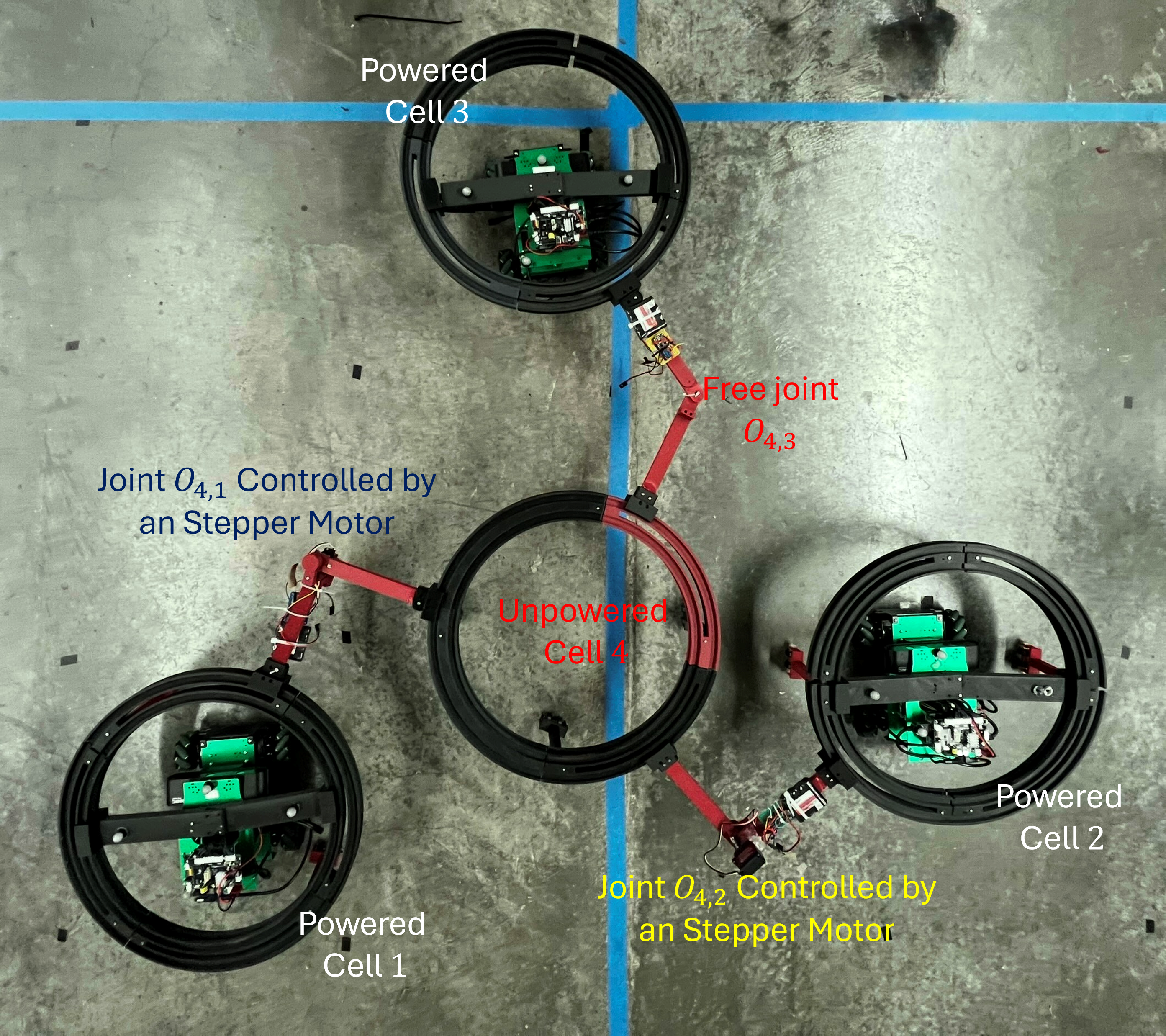}
\caption{An example ATUGV with three powered cells and one unpowered cell.}
\label{DGUConfiguration} 
\end{figure}
\vspace{-0.3cm}
\subsection{Related Work}
Mobile robots are flexible and stable systems that have found a variety of applications in manufacturing and logistics \cite{cognominal2021evolving, fragapane2022increasing}, agriculture \cite{yepez2023mobile, sun2021enabling}, military and defense-related operations \cite{gnanaprakasam2023novel, ali2021artificial, appelqvist2010mechatronics}, transportation \cite{raj2022comprehensive, zhang2022cooperative}, construction \cite{li2022map, asadi2021automated}, space exploration \cite{gul2021multi, seeni2010robot}, and many others.
They have been widely used for off-road applications as they can navigate rough surfaces and terrains \cite{rastgoftar2018data}. The DARPA Grand Challenge and PerceptOR program have advanced autonomous driving through improved perception and decision-making. In this context, Ref. \cite{kelly2006toward} introduced a three-tier architecture for off-road navigation. In \cite{reina2016lidar} and \cite{gonzalez2017thermal},  LIDAR, stereo video, and thermal cameras are used for real-time traversability assessment. 

Unlike the available mobile robots, this paper provides a design and model for an affine transformable unmanned ground vehicle (ATUGV). This novel concept adds the capability of reconfiguration and carrying multiple distributed payloads to existing mobile robots. As a result, ATUGV safely deforms to pass constrained environments in addition to its capability to navigate rough surfaces. Affine transformations of individual agents have been previously studied in \cite{chen2019ordered, rastgoftar2021safe, rastgoftar2021scalable} but affine transformations of an integrated mobile robot have never been studied in the past. It has been shown that an affine transformation of a multi-agent system can be defined as a decentralized leader-follower problem with communication weights that are assigned based on the reference configuration of the agent team. In this context, the elements of the Jacobian and rigid-body translation of an affine transformation in $\mathbb{R}^n$ can be defined based on $n+1$ leaders forming an $n$-D complex in $\mathbb{R}^n$ and precisely acquired by followers through local interagent communication \cite{rastgoftar2022spatio}. 

In the literature, researchers have developed the concept of robots with deformation and morphing capabilities, primarily designed for aerial applications.   Ref. \cite{kumar2020flight} develops model and control for a quadcopter with the capability of morphing during the flight. The concept of a single multi-copter with the capability of transitioning between quadcopter, hexacopter, octocopter, and decacopter models was developed in Ref. \cite{niemiec2018control}. Authors of Ref. \cite{zhao2017deformable} modeled the kinematics and dynamics of a single quadcopter with deformable rotor arms. Researchers have also developed fault-tolerant control models for multi-copters which enable them to recover a safe flight in the presence of actuation failure \cite{quan2021controllable, watanabe2022fault}. The corresponding author has extensively investigated the problem of continuum deformation of ``free'' quadcopters over the past few years. In Refs. \cite{rastgoftar2021safe,  rastgoftar2020scalable}, quadcopters are considered as particles of a deformable body and decentralized leader-follower approaches have been developed to ensure safety of their collective and aggressive deformation in obstacle-laden environments.

\vspace{-0.3cm}
\subsection{Contributions}
This project design, implement, and experiment a new affine transformable unmanned ground vehicle (ATUGV) with the capability of aggressive deformation in a constrained space enabling the distributed ground payload transport and  surveillance capabilities (Fig.  \ref{DGUConfiguration}). ATUGV will be a planar UGV-structure system comprising powered and unpowered cells, mobile robots for power generation and controlling the powered cells, stepper motors for controlling the unpowered cells, links and joints for building an integrated and morphable vehicle. We note that the unpowered cells can carry payloads while the powered cells enclose mobile robots. All cells are connected by a deformable structure that allows aggressive continuum deformation of the ATUGV. Therefore, the inter-agent distances between cells can safely change and be controlled.

Compared to the existing literature, this paper has the following contributions:
\begin{itemize}
\item We propose a novel approach for structuring the ATUGV cells using a deep neural network. This structure enables an ATUGV to aggressively deform while ensuring safety.
\item We decompose the ATUGV motion into rigid-body translation, shear deformation, and rotation. By this decomposition, we can characterize the deformable motion of ATUGV and specify a lower limit on the principal strains of the ATUGV affine transformation so that  it is safely planned.
\item  We experimentally validate the proposed ATUGV motion through hardware implementation using ROS-based mobile robots and servo-controlled joints, demonstrating accurate and safe tracking of affine transformations.
\end{itemize}

\vspace{-0.3cm}
\subsection{Outline}
The rest of the paper is organized as follows: The components of the ATUGV are explained in Section \ref{ATUGV Components}. The affine transformation model is developed in Section \ref{atrans} to define the desired trajectories of  powered and unpowered cells. Section \ref{atrans} also explains how an affine transformation can be tracked by the powered and unpowered cells in a safe manner. Affine transformation of the ATUGV is demonstrated by hardware experimentation and simulation and the results are presented  in Section \ref{results}. Concluding remarks and plan for the future work are discussed in Section \ref{Conclusion}.
\vspace{-0.3cm}
\section{ATUGV Structure and Components}\label{ATUGV Components}
The ATUGV is an integrated system consisting of cells and connection mechanisms that are all distributed in a two-dimensional plane. Cells are either ``powered''or "unpowered," where powered cells enclose unmanned vehicles and unpowered cells enclose payloads. We use set $\mathcal{V}=\left\{1,\cdots,N\right\}$ to define all ``powered'' and ``unpowered'' cells. Note that a payload enclosed by an ``unpowered'' cell $i\in \mathcal{V}$, or an unmanned vehicle enclosed by a powered cell $i\in \mathcal{V}$, is identified by the same index number $i\in \mathcal{V}$ in this paper. 

\begin{assumption}
    Every cell $i\in \mathcal{V}$ has the same radius $r$.
\end{assumption}

   
Cell interconnections are defined by graph $\mathcal{G}\left(\mathcal{V},\mathcal{E}\right)$, where edge set $\mathcal{E}\subset \mathcal{V}\times \mathcal{V}$ defines the cell interconnections. Given graph $\mathcal{G}\left(\mathcal{V},\mathcal{E}\right)$, set
\begin{equation}
    \mathcal{N}_i=\left\{j\in \mathcal{V}:\left(j,i\right)\in \mathcal{E}\right\}
\end{equation}
defines three neighbors of cell $i\in \mathcal{V}$. We  decompose $\mathcal{V}$ into  $N_L$ distinct groups where the group identification numbers are defined by set $\mathcal{L}=\left\{0,\cdots,N_L \right\}$. Therefore, $\mathcal{V}$ can be expressed as
\begin{equation}
    \mathcal{V}=\bigcup_{l\in \mathcal{L}}\mathcal{V}_l
\end{equation}
where $\mathcal{V}_l$ through $\mathcal{V}_{N_L}$ are disjoint subsets of $\mathcal{V}$. Particularly, set $\mathcal{V}_0=\left\{1,2,3\right\}$ defines three boundary cells at vertices of the tringle containing all other cells. The remining cells identified by $\mathcal{V}\setminus \mathcal{V}_0$ are interior cells. 
\begin{figure}[h]
\centering
\subfigure[]{\includegraphics[width=0.44\linewidth]{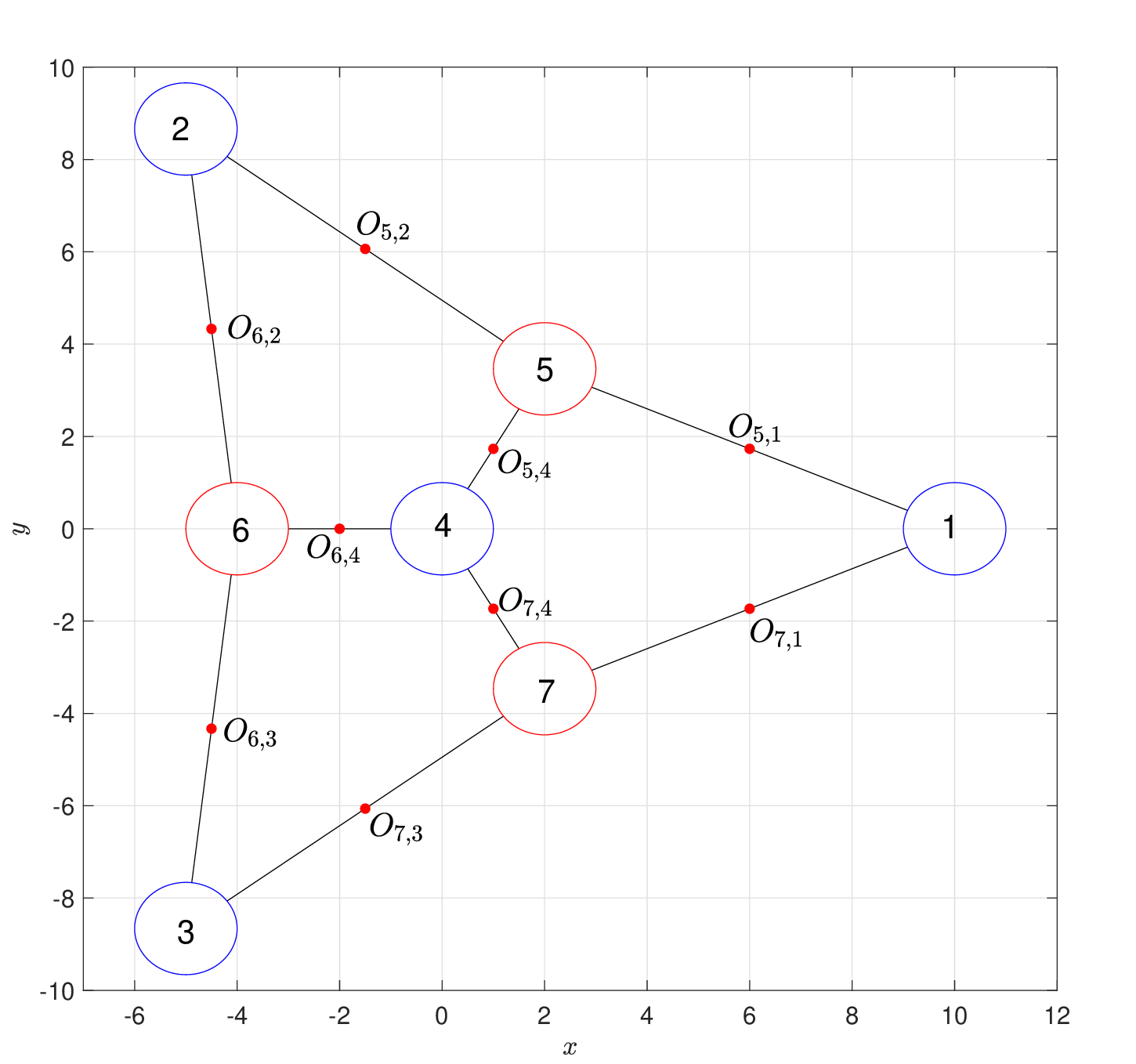}}
\subfigure[]{\includegraphics[width=0.54\linewidth]{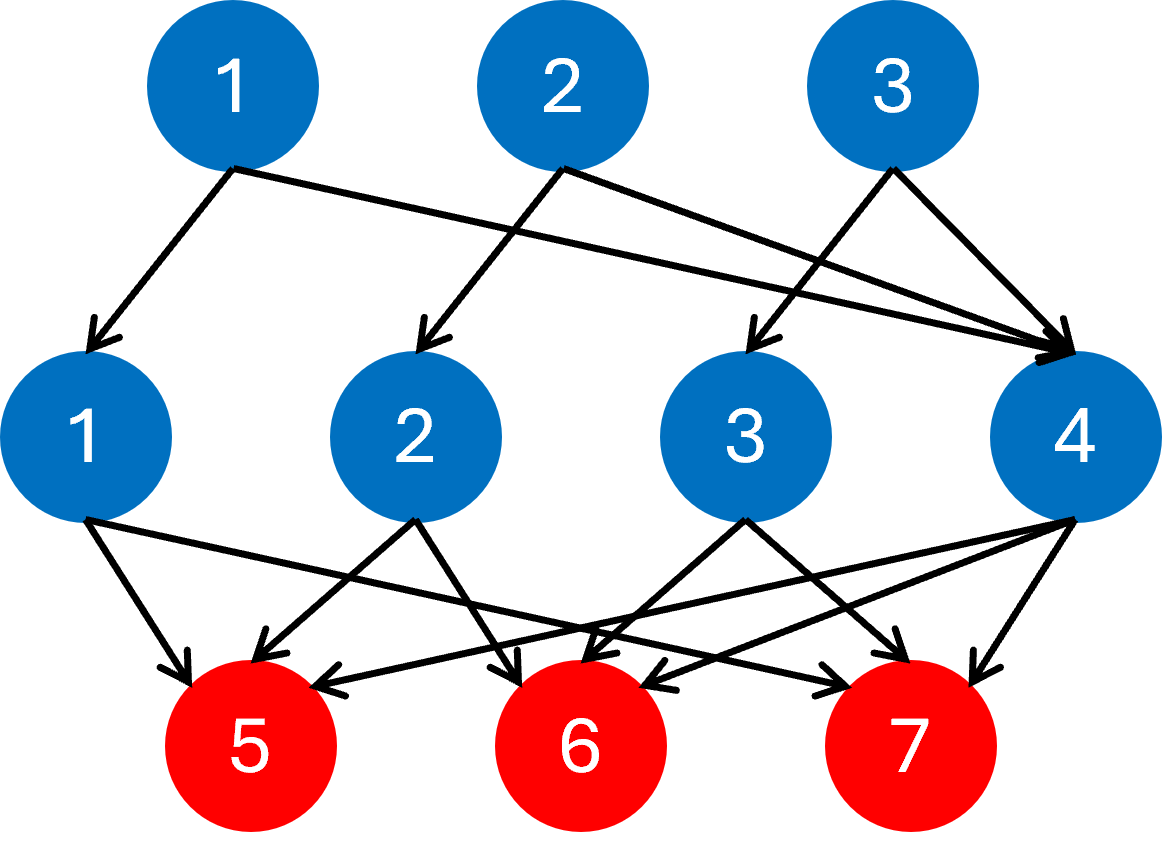}}
\vspace{-0.4cm}
\caption{(a) A seven-cell ATUGV with cells identified by $\mathcal{V}=\left\{1,\cdots,7\right\}$.  (a) The cell interconnections defined by a neural network.
}
\label{PUConnections}
\end{figure}

Without loss of generality, this paper identifies unpowered cells by $\mathcal{V}_{N_L}$. Therefore, $\mathcal{V}\setminus \mathcal{V}_{N_L}$ identifies the remaining powered cells. For better clarification, Fig. \ref{PUConnections}(a) illustrates the reference configuration of a seven-cell ATUGV with the cells identified by $\mathcal{V}=\left\{1,\cdots,7\right\}$. The cells are divided into three groups; therefore, $\mathcal{L}=\left\{0,1,2\right\}$ ($N_L=2$) $\mathcal{V}=\mathcal{V}_0\cup \mathcal{V}_2\cup \mathcal{V}_3$, where $\mathcal{V}_0=\left\{1,2,3\right\}$ identifies the boundary cells, $\mathcal{V}_1=\left\{4\right\}$, and $\mathcal{V}=2=\left\{5,6,7\right\}$ identifies the unpowered cells. Note that $\mathcal{V}\setminus \mathcal{V}_2=\left\{1,\cdots,4\right\}$ defines the powered cells.


The cell interconnections, defined by graph $\mathcal{G}\left(\mathcal{V},\mathcal{E}\right)$, are structured such that they can be represented by a deep neural network of $N_L+1$ layers where neurons of layer $l\in \mathcal{L}$ are defined by $\mathcal{W}_l\in\mathcal{V}$ as follows:
\begin{equation}
    \mathcal{W}_l=
    \begin{cases}
         \mathcal{V}_l & l\in \left\{0,M\right\}\\
         \mathcal{W}_{l-1}\cup \mathcal{V}_l&\mathrm{otherwise}
    \end{cases}
\end{equation}
Then, set 
\begin{equation}
    \mathcal{I}_{i,l}=
    \begin{cases}
        \mathcal{N}_i&i\in \mathcal{V}_l\\
        \left\{i\right\}&i\in \mathcal{W}_l\setminus \mathcal{V}_l
    \end{cases}
\end{equation}
defines the neurons of layer $l-1\in \mathcal{L}$ connected to $i\in \mathcal{W}_l$.

For the seven-cell shown in Fig. \ref{PUConnections}, the cell interconnections are specified by the neural network shown in Fig. \ref{PUConnections}(b), where $\mathcal{N}_4=\left\{1,2,3\right\}$, $\mathcal{N}_5=\left\{1,2,4\right\}$, $\mathcal{N}_6=\left\{2,3,4\right\}$, and $\mathcal{N}_7=\left\{1,3,4\right\}$.

\begin{figure}[ht]
\centering
\includegraphics[width=0.48 \textwidth]{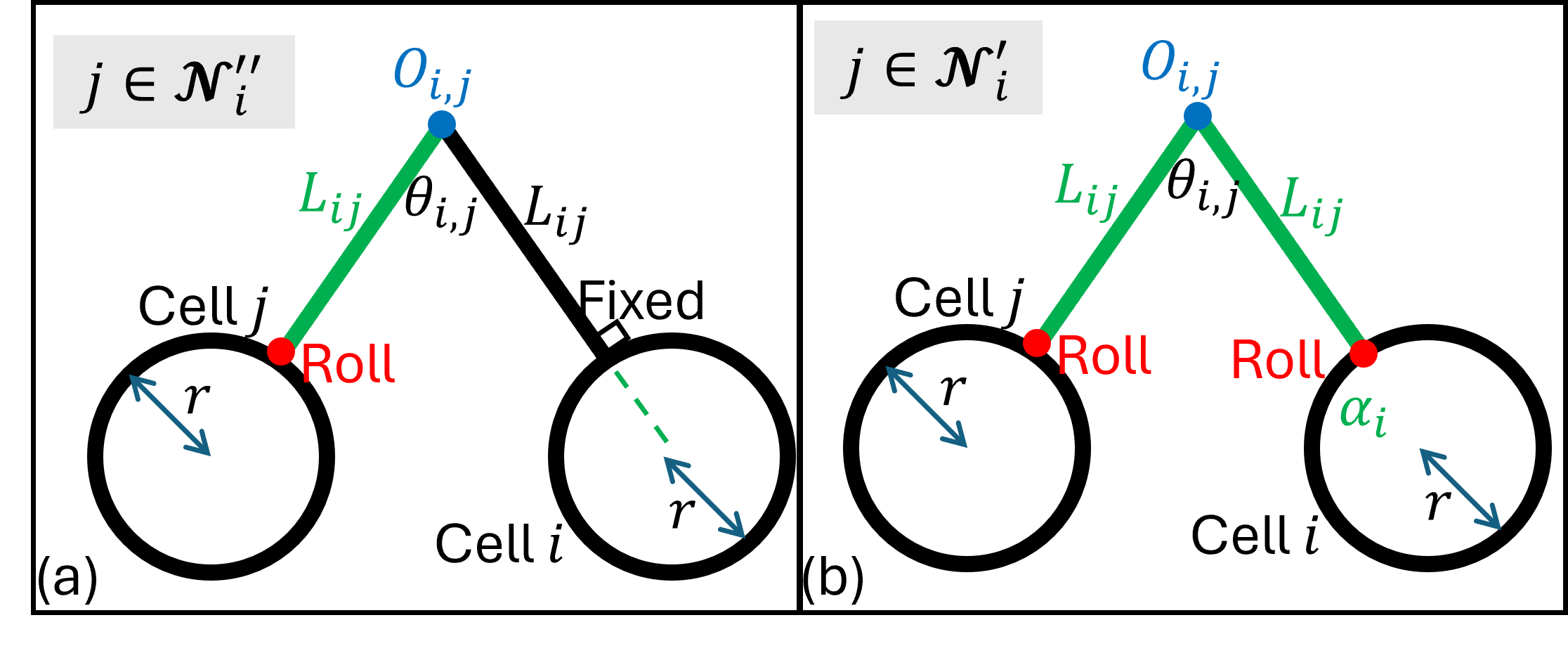}
\caption{The mechanisms used for connecting $i\in \mathcal{V}$ to $j\in \mathcal{N}_i$.}
\label{BarConnection} 
\end{figure}

\subsection{ATUGV Reference Configuration}
The reference configuration of the ATUGV specifies defines as a configuration at which all elbow angle $\theta_{i,j}=0$, for every $i\in \mathcal{V}$ and every $j\in \mathcal{N}_i$. The reference configuration of the ATUGV is defined as the configuration in which all elbow angles satisfy $\theta_{i,j}=0$  for every $i\in\mathcal{V}\setminus \mathcal{V}_0$ and every $j\in \mathcal{N}_i$.

Without loss of generality, the reference configuration of the agent team is distribued over an equilateral triangle with vertices occupied by three boundary cells $\mathcal{V}_0$. By using $\mathbf{a}_i$ to denote the reference position of cell $i\in \mathcal{V}$, reference position of the interior cells are determined by
\begin{equation}
    \mathbf{a}_i={1\over 3}\sum_{j\in \mathcal{N}_i}\mathbf{a}_j.
\end{equation}
Figure \ref{PUConnections} shows the reference configuration of a seven-cell ATUGV when all elbow angles are zero.
 \subsection{Cell Interconnection Mechanism}
 Every cell $i\in \mathcal{V}\setminus \mathcal{V}_0$ needs to be connected to a cell $j\in \mathcal{N}_i$ in such a way that 
 every ATUGV cell can freely move in the deformation plane. In other words, the motion of any cell in the deformation plane must not be  locked because of an indeterministic connection between two cells. Additionally, the powered and unpowered  cells should be able to rotate in the deformation plane  when the ATUGV deforms. To achieve this,  we desire that the rotation of every powered or unpowered cell is deterministic, and it can be deterministically specified based on cell positions. 

To satisfy these key requirements, we use the mechanisms shown in {\color{black}Fig. \ref{BarConnection} }  to connect $\mathcal{N}_i$ cells to $i\in \mathcal{V}\setminus \mathcal{V}_0$. More specifically, one cell of $\mathcal{N}_i$ is defined by $\mathcal{N}_i^{''}$ and connected to $i\in \mathcal{V}\setminus \mathcal{V}_0$ through the mechanism shown in Fig.  \ref{BarConnection}(a). The remaining two cells of $\mathcal{N}_i$ are defined by $\mathcal{N}_i^{'}$ and connected to $i\in \mathcal{V}\setminus \mathcal{V}_0$ through the mechanism shown in Fig.  \ref{BarConnection}(b).

Note that the two arms used for connecting $j\in \mathcal{N}_i$ to $i\in \mathcal{V}\setminus \mathcal{V}_0$ both have the same length $L_{i,j}$. Let $d_{i,j}(t)$ denote the distance between cells $i\in \mathcal{V}\setminus \mathcal{V}_0$ and $j\in \mathcal{N}_i$, then,
\begin{equation}
    \theta_{i,j}(t)=2\sin^{-1}\left({d_{i,j}(t)\over 2\left(L_{i,j}+r\right)}\right)
\end{equation}
is the angle between the two arms connecting cells (Fig. \ref{BarConnection}). 






Joint $O_{i,j}$ is free to rotate if the bar mechanisms {\color{black}shown in Fig. \ref{BarConnection} } is used to connect the powered cell $i\in \mathcal{V}$ to the powered cell $j\in \mathcal{N}_i$.


For better clarification, consider the ATUGV shown in Fig. \ref{DGUConfiguration} where cell $4\in \mathcal{V}_1$ is connected to the three cells defined by $\mathcal{N}_4=\left\{1,2,3\right\}\subset \mathcal{V}_0$. As illustrated in Fig.  \ref{DGUConfiguration}, the bar mechanisms connecting cells $1$ and $2$ to cell $4$ are controlled by the stepper motors at joints $O_{4,1}$ and $O_{4,2}$. Therefore, $\mathcal{N}_4^{'}=\left\{1,2\right\}$.

\subsection{Connection between Mobile Robot and Powwered Cell}
A mobile robot contained by a powered cell is connected to the inner rail through a bar that can slide along the inner rail of the enclosing powered cells. This bar can freely slide along the inner rail of the enclosing powered cell. Fig.  \ref{innercolnnection} illustrated how a mobile robot is connected to the enclosing powered cell.

\begin{figure}[ht]
\centering
\includegraphics[width=0.48 \textwidth]{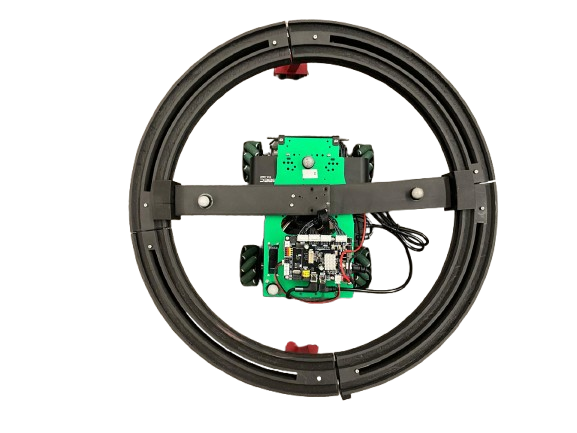}
\caption{Connection between a mobile mobot and the powered cell through a bar that can freely slide along the inner rail of the enclosing cell.}
\label{innercolnnection} 
\end{figure}


\section{ATUGV Motion Planning and Control}\label{atrans} 
The desired coordination of the ATUGV is defined by affine transformation
\begin{equation}\label{affinetransformation}
    \mathbf{p}_i=\mathbf{Q}(t)\mathbf{a}_i+\mathbf{d}(t),\qquad \forall i\in \mathcal{V},
\end{equation}
where $\mathbf{a}_i=\begin{bmatrix}X_i&Y_i\end{bmatrix}^T$ and $\mathbf{p}_i(t)=\begin{bmatrix}u_i(t)&v_i(t)\end{bmatrix}^T$ denote reference and desired positions of $i\in \mathcal{V}$, respectively, $\mathbf{Q}\in \mathbb{R}^{2\times 2}$ is the Jacobian matrix and $\mathbf{d}=\begin{bmatrix}
    d_1(t)&d_2(t)
\end{bmatrix}^T\in \mathbb{R}^{2\times 1}$ is the rigid-body displacement vector. 
Using the polar decomposition, $\mathbf{Q}(t)$ can be expressed as 
\begin{equation}
    \mathbf{Q}=\mathbf{R}\left(\sigma_r\right)\mathbf{U}\left(\sigma_d,\lambda_1,\lambda_2\right),
\end{equation}
where $\sigma_r(t)$ and $\sigma_d(t)$ and rigid-body rotation and shear deformation angles,  $\lambda_1(t)\in (0,1]$ is the first principal strain and $\lambda_2(t)\in (0,1]$ is the second principal strain, all at time $t$. Also,
\begin{equation}
    \mathbf{R}=\begin{bmatrix}
        \cos\sigma_r&-\sin\sigma_r\\
        \sin\sigma_r&\cos \sigma_r
    \end{bmatrix}
\end{equation}
is the rigid-body rotation matrix, and 
\begin{equation}
    \mathbf{U}=\begin{bmatrix}
        \cos\sigma_d&-\sin\sigma_d\\
        \sin\sigma_d&\cos \sigma_d
    \end{bmatrix}
    \begin{bmatrix}
        \lambda_1&0\\
        0&\lambda_2
    \end{bmatrix}
    \begin{bmatrix}
        \cos\sigma_d&\sin\sigma_d\\
        -\sin\sigma_d&\cos \sigma_d
    \end{bmatrix}
\end{equation}
is the positive definite strain matrix. Affine transformation of the ATUGV can be planned by specifying generalized coordinates $\lambda_1(t)$, $\lambda_2(t)$, $\sigma_r(t)$, $\sigma_d(t)$, $d_1(t)$ and $d_2(t)$, that are considered as generalized coordinates of an affine transformation.

The paper's main objective is that all cells are treated as particles of a deformable body where the actual position of every cell $i\in \mathcal{V}$, denoted by $\mathbf{r}_i(t)$,  stably tracks the desired position $\mathbf{p}_i(t)$, defined by Eq. \eqref{affinetransformation}. 
\subsection{Affine Deformation Constraints}
There exists a lower bound  on the  principal strains $\lambda_1$ and $\lambda_2$ of the affine transformation that are determined based on the minimum speration distance in the reference configuration of ATUGV and the cell radius $r$. More specifically, $\lambda_1(t)\in \left[\lambda_{min},1\right]$ and $\lambda_2(t)\in \left[0,1\right]$ at any time $t$ wehere $\lambda_{min}>0$ is obtained by Theorem \ref{thm}.
\begin{theorem}\label{thm}
    Let $d_{min}$ denote the minimum sepration distance between every two cells in the reference configuration, and $r$ denote the same radius of every ATUGV cell. Then,
    \begin{equation}\label{lmin}
        \lambda_{min}={2r\over d_{min}}
    \end{equation}
    assigns a lower bound for the principal strains of the affine transformations.
\end{theorem}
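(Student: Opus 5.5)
The plan is to show that whenever both principal strains satisfy $\lambda_1,\lambda_2\ge \lambda_{min}=2r/d_{min}$, no two cells overlap. Two cells of radius $r$ are disjoint exactly when their centers are at least $2r$ apart, so the claim reduces to a lower bound on $\|\mathbf{p}_i-\mathbf{p}_j\|$ for every pair $i\neq j$.

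First, subtract the affine transformation \eqref{affinetransformation} for two cells $i\neq j$. The translation $\mathbf{d}(t)$ cancels, giving
\[
\mathbf{p}_i-\mathbf{p}_j=\mathbf{Q}(\mathbf{a}_i-\mathbf{a}_j)=\mathbf{R}(\sigma_r)\mathbf{U}(\mathbf{a}_i-\mathbf{a}_j).
\]
Since $\mathbf{R}$ is orthogonal, $\|\mathbf{p}_i-\mathbf{p}_j\|=\|\mathbf{U}(\mathbf{a}_i-\mathbf{a}_j)\|$.

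Second, write $\mathbf{U}=\mathbf{S}\,\mathrm{diag}(\lambda_1,\lambda_2)\,\mathbf{S}^T$, where $\mathbf{S}$ is the rotation by $\sigma_d$. Set $\mathbf{b}=\mathbf{S}^T(\mathbf{a}_i-\mathbf{a}_j)$, which has the same norm as $\mathbf{a}_i-\mathbf{a}_j$. Then
\[
\|\mathbf{U}(\mathbf{a}_i-\mathbf{a}_j)\|^2=\lambda_1^2b_1^2+\lambda_2^2b_2^2\ge \min(\lambda_1,\lambda_2)^2\,\|\mathbf{a}_i-\mathbf{a}_j\|^2 .
\]
In other words, the smallest singular value of $\mathbf{Q}$ is $\min(\lambda_1,\lambda_2)$.

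Third, use the definition of $d_{min}$: $\|\mathbf{a}_i-\mathbf{a}_j\|\ge d_{min}$ for all $i\neq j$. This gives
\[
\|\mathbf{p}_i-\mathbf{p}_j\|\ge \min(\lambda_1,\lambda_2)\,d_{min}.
\]
Requiring $\min(\lambda_1,\lambda_2)\,d_{min}\ge 2r$ yields exactly \eqref{lmin}. So if both $\lambda_1,\lambda_2\in[\lambda_{min},1]$, the desired cell positions never overlap.

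To show the bound cannot be lowered, take the pair $(i,j)$ that realizes $d_{min}$. Choose $\sigma_d$ so that $\mathbf{a}_i-\mathbf{a}_j$ lies along a principal direction, and set the corresponding strain below $\lambda_{min}$. Then $\|\mathbf{p}_i-\mathbf{p}_j\|<2r$ and the two cells collide, so $\lambda_{min}$ is the tightest uniform lower bound.

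The main obstacle is interpretive rather than computational. The statement loosely says $\lambda_2\in[0,1]$ in the preceding text, yet safety needs the bound on \emph{both} principal strains, since either one can be the minimum singular value. I would state the proof for $\min(\lambda_1,\lambda_2)$ and then note that $\lambda_1\ge\lambda_2$ (ordering the principal strains) would let the condition be imposed on the smaller one. A secondary point is that $\lambda_{min}\le 1$ requires $d_{min}\ge 2r$, which holds because the cells do not overlap in the reference configuration.
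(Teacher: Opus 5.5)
Your proposal is correct and follows essentially the same route as the paper: cancel the translation, bound $(\mathbf{a}_i-\mathbf{a}_j)^T\mathbf{Q}^T\mathbf{Q}(\mathbf{a}_i-\mathbf{a}_j)$ from below by $\min\{\lambda_1,\lambda_2\}^2\,d_{min}^2$, and require this to be at least $4r^2$. Your version is cleaner than the paper's, which writes the final non-collision condition with the inequality reversed ($4r^2\geq\lambda_{min}^2d_{min}^2$), and your tightness argument (aligning $\mathbf{a}_i-\mathbf{a}_j$ with a principal direction) goes beyond what the paper proves.
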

\begin{proof}
    When cells' desired trajectories are defined by an affine transformation, the desired positions of every two cells $i$ and $j$ holds the following condition:
\begin{equation}
\|\mathbf{p}_i-\mathbf{p}_j\|^2=\left(\mathbf{a}_i-\mathbf{a}_j\right)^T\mathbf{Q}^T\mathbf{Q}\ \left(\mathbf{a}_i-\mathbf{a}_j\right),\qquad i,j\in \mathcal{V},
\end{equation}
where 
\begin{subequations}
\begin{equation}
\min\limits_{i\neq j} \|\mathbf{p}_i-\mathbf{p}_j\|^2=4r^2
\end{equation}
\begin{equation}
\left(\mathbf{a}_i-\mathbf{a}_j\right)^T\mathbf{Q}^T\mathbf{Q}\ \left(\mathbf{a}_i-\mathbf{a}_j\right)\geq \lambda_{min}^2\|\mathbf{a}_i-\mathbf{a}_j\|^2\geq \lambda_{min}2^2 d_{min}^2,
\end{equation}
and 
\begin{equation}
\lambda_{min}=\min\left\{\lambda_1,\lambda_2\right\}.
\end{equation}
Therefore, no two cells collide, if
\[
4r^2\geq \lambda_{min}^2d_{min}^2
\]
\end{subequations}
which in turn implies that Eq. \eqref{lmin} assigns a lower bound for the principal strains $\lambda_1(t)$ and $\lambda_2(t)$ at any time $t$.

\end{proof}




\subsection{Affine Transformation Tracking by Powered Cells}

Every powered cell knows desired position $\mathbf{p}_i(t)$ at any time $t$.
Also, every robot $i\in \mathcal{V}\setminus \mathcal{V}_{N_L}$ knows its own position with respect to the global coordinate system. More precisely, UGV coordinates in an indoor robotic facility equipped with a high-resolution motion capture system where the actual position of every cell $i\in \mathcal{V}\setminus  \mathcal{V}_{N_L}$ is precisely measured and communicated to the robot at any time. Therefore, every powered cell $i\in \mathcal{V}$ can accurately track the desired trajectory $\mathbf{p}_i(t)$ by commanding the velocity as follows:
\begin{equation}
    \mathbf{v}_i=\alpha_i\left(\mathbf{p}_i-\mathbf{r}_i\right)
\end{equation}
where $\alpha_i$ is the tuning parameter that are assigned by experiments.
\subsection{Affine Transformation Tracking by Unpowered Cells}
The position of an unpowered cell $i\in \mathcal{V}$ is controlled by controlling the elbow angle $\theta_{i,j}$, for every $i\in \mathcal{V}\setminus \mathcal{V}_0$ and every $j\in \mathcal{N}_i$.
Defining $\mathcal{N}_i=\left\{j_1,j_2,j_3\right\}$ and $\mathcal{N}_i^{'}=\left\{j_1,j_2\right\}$, two stepper motors are installed in the joints $O_{i,j_1}$ and $O_{i,j_2}$ to control the angles of the elbows denoted by $\theta_{i,j_1}(t)$ and $\theta_{i,j_2}(t)$. The desired elbow angles are denoted by   $\theta_{i,j_1}^d(t)$ and  $\theta_{i,j_1}^d(t)$, commanded to the stepper motors, are determined based on the desired positions of agents $\mathbf{p}_i(t)$, $\mathbf{p}_{j_1}(t)$, and $\mathbf{p}_{j_2}(t)$ as follows:
\begin{subequations}
    \begin{equation}
    \theta_{i,j_1}^d(t)=2\sin^{-1}\left({\|\mathbf{p}_i(t)-\mathbf{p}_{j_1}(t)\|\over 2\left(L_{i,j}+r\right)}\right)
\end{equation}
\begin{equation}
    \theta_{i,j_2}^d(t)=2\sin^{-1}\left({\|\mathbf{p}_i(t)-\mathbf{p}_{j_2}(t)\|\over 2\left(L_{i,j}+r\right)}\right)
\end{equation}
\end{subequations}

\begin{figure}[ht]
\centering
\includegraphics[width=0.48 \textwidth]{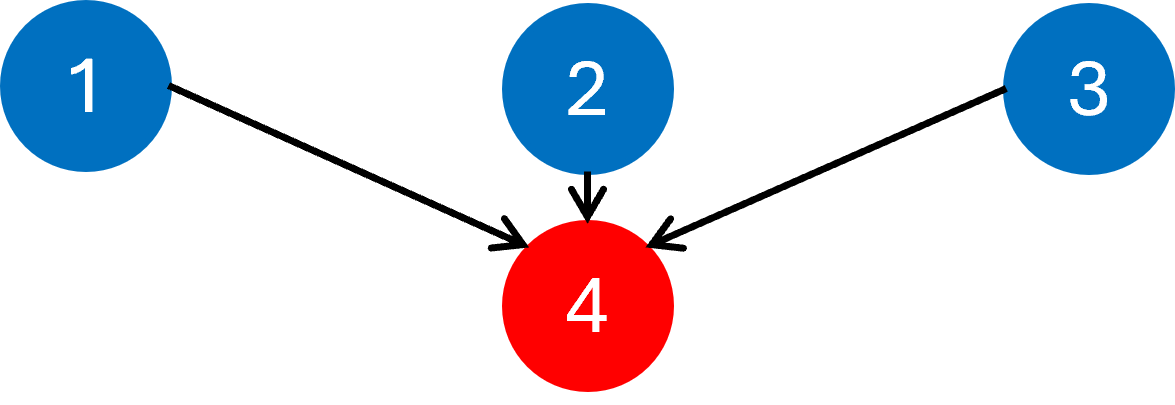}
\caption{An example ATUGV with three powered cells and one unpowered cell.}
\label{DGUConfigurationnn} 
\end{figure}
\begin{table}[]
    \centering
    \caption{Affine transformation design parameters for for the hardware-bases and simulation-based cases studies.}
    \begin{tabular}{|c|c|c|c|c|c|c|c|}
        \hline
         &$\lambda_1^f$&$\lambda_2^f$&$d_1^f$&$d_2^f$&$\sigma_r^f$&$\sigma_d^f$& $t_f$\\        
        \hline
        Experiment &0.9&0.8&1m&1m&0.2rad&0.15 rad &20s\\
         \hline
        Simulation &0.9&0.8&1m&1m&0.707rad&0.3 rad &10s\\
        \hline
    \end{tabular}    
    \label{tab:my_label}
\end{table}
\section{Results}\label{results}
The functionality of the proposed ATUGV system is evaluated through hardware experimentation and simulation with the results presented in Sections \ref{Experimental Results} and \ref{Simulation Results}, respectively. The affine transformation of the ATUGV is planned by defining generalized coordinates $\lambda_1$, $\lambda_2$, $d_1$, $d_2$, $\sigma_r$, and $\sigma_d$ over a finite horizon of time as follows:
\begin{subequations}
    \begin{equation}
        \lambda_1(t)=\left(1-\beta(t)\right)\lambda_1^0+\beta(t)\lambda_1^f,\qquad t\in \left[t_0,t_f\right],
    \end{equation}
     \begin{equation}
        \lambda_2(t)=\left(1-\beta(t)\right)\lambda_2^0+\beta(t)\lambda_2^f,\qquad t\in \left[t_0,t_f\right],
    \end{equation}
    \begin{equation}
        d_1(t)=\left(1-\beta(t)\right)d_1^0+\beta(t)d_1^f,\qquad t\in \left[t_0,t_f\right],
    \end{equation}
    \begin{equation}
        d_2(t)=\left(1-\beta(t)\right)d_2^0+\beta(t)d_2^f,\qquad t\in \left[t_0,t_f\right],
    \end{equation}
    \begin{equation}
        \sigma_r(t)=\left(1-\beta(t)\right)\sigma_r^0+\beta(t)\sigma_r^f,\qquad t\in \left[t_0,t_f\right],
    \end{equation}
    \begin{equation}
        \sigma_d(t)=\left(1-\beta(t)\right)\sigma_d^0+\beta(t)\sigma_d^f,\qquad t\in \left[t_0,t_f\right],
    \end{equation}
\end{subequations}
where we choose $t_0=0$, $\lambda_1^0=\lambda_2^0=1$, $d_1^0=d_2^0=0$, and $\sigma_r^0=\sigma_d^0=0$ for both case studies. The final time $t_f$, $\lambda_1^f$, $\lambda_2^f$, $d_1^f$, $d_2^f$, $\sigma_r^f$, and $\sigma_d^f$ are reported in Table  \ref{tab:my_label} for the hardware experimentation and simulation. We note that $\beta:\left[t_0,t_f\right]\rightarrow \left[0,1\right]$ is an increasing function satisfying the following conditions $\beta(t_0)=0$ and $\beta(t_f)=1$.


\subsection{Experimental Results}\label{Experimental Results}

The experiments are carried out in the indoor testing facility measuring $5$m x $5$m x $2$m in the S.M.A.R.T Lab at the University of Arizona equipped with Vicon Motion Capture system for indoor positioning.
\begin{figure}[ht]
\centering
\includegraphics[width=0.48 \textwidth]{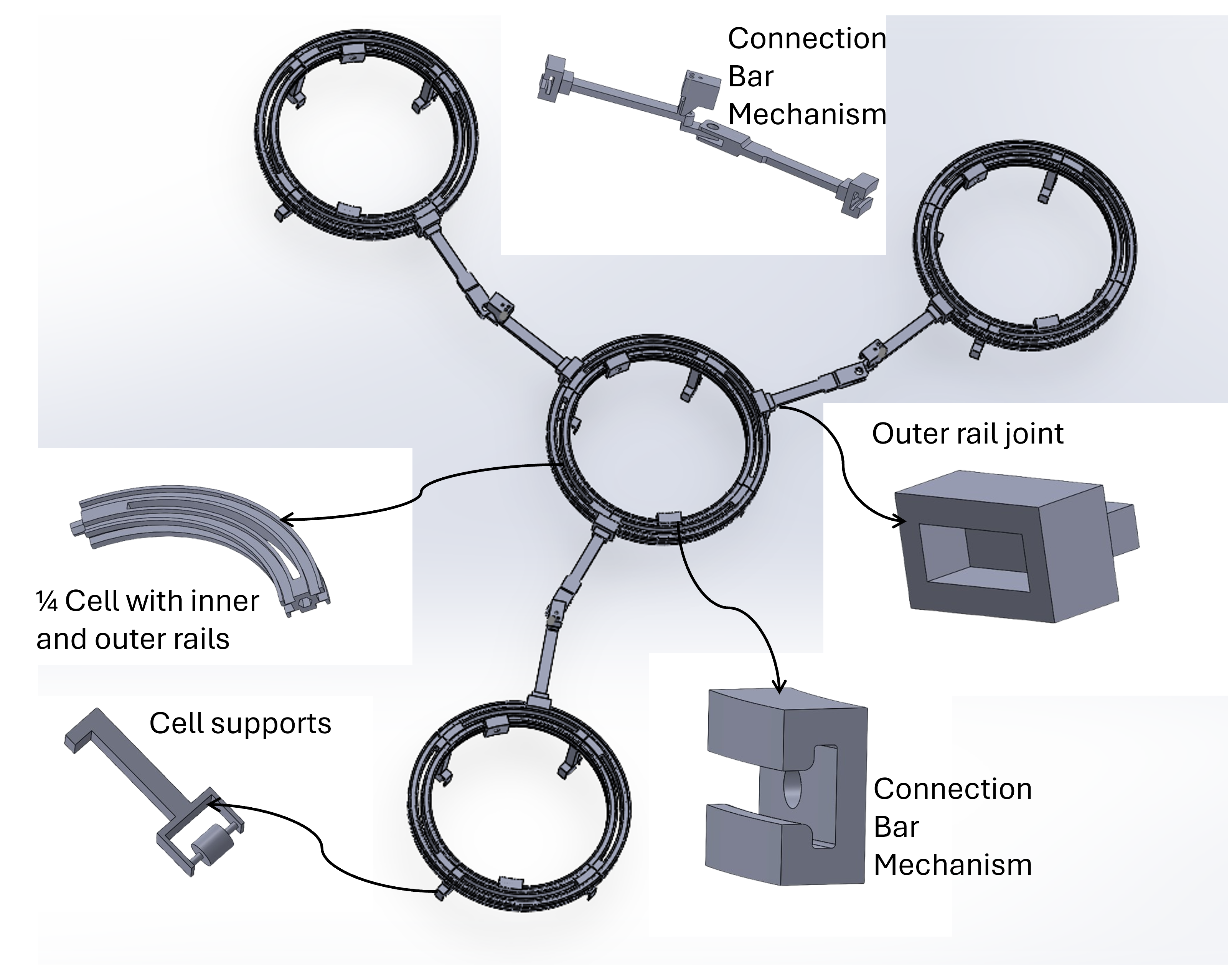}
\caption{An example ATUGV with three powered cells and one unpowered cell.}
\label{DeformableStructure} 
\end{figure}
\subsubsection{Hardware Setup}
For the hardware experimentation, we conduct tests on the operation of a four-cell ATUGV, as illustrated in Fig. \ref{DGUConfiguration}, in conjunction with the deformable structure designed in SolidWorks, depicted in Fig. \ref{DeformableStructure}. The deformable structure, which was fabricated using the available 3D printers in the S.M.A.R.T lab, comprises five primary components, as detailed in Fig. \ref{DeformableStructure}.

\begin{figure}[ht]
\centering
\includegraphics[width=0.48 \textwidth]{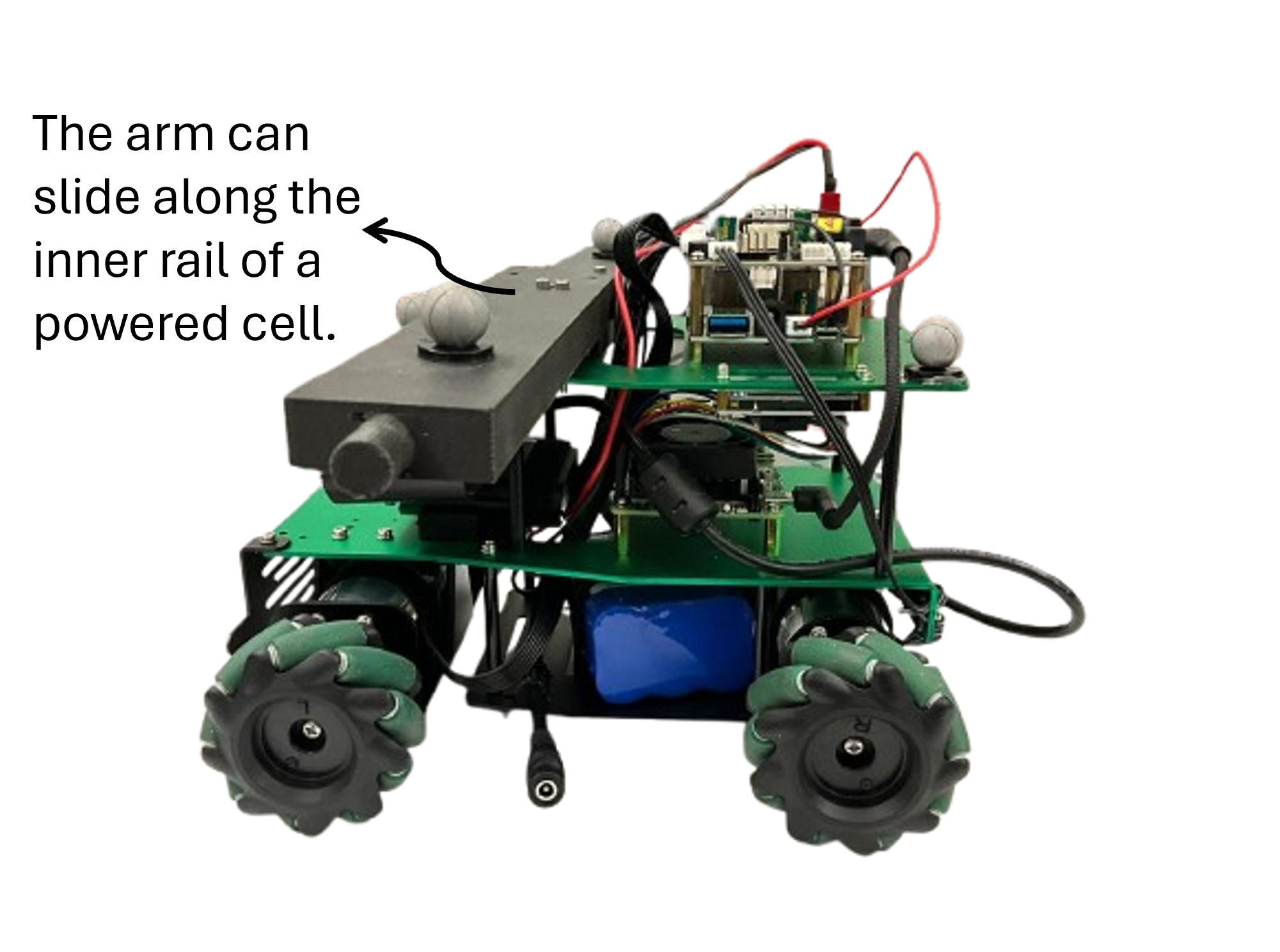}
\caption{A ROSMASTER X3 mobile robot is used to control the motion of the powered cells. A mobile robot is connected to the enclosing cell through a bar that can slide along the inner rail of the enclosing cell.}
\label{main3} 
\end{figure}

For our experiment,  every powered cell encloses 
 ROSMASTER X3 mobile robots to generate the required force for tracking the desired affine transformation by the entire powered and unpowered cells (Our ATUGV system, shown in Fig. \ref{DGUConfiguration},   consists of three powered cells (Cell 1, Cell 2, and Cell 3), each equipped with a ROSMASTER X3, while Cell 4 remains unpowered and moves passively based on the forces exerted by the other cells). Note that each mobile robot is connected to the enclosing powered cell  through the arm that is attached on top of the robot (see Figs. \ref{main3} and \ref{innercolnnection}).

 By using Mecanum wheels, these robots can move in any direction, allowing smooth and efficient force application. Their ability to adjust motor speeds and torque helps maintain balance and stability while executing different movement patterns. Through active motion control, force distribution, and multi-robot coordination, the robots work together to influence system movement and enable adaptable navigation.

The force generation mechanism of the ROSMASTER X3 robots is based on motion synchronization and adaptation to external forces. The powered cells apply controlled forces to drive linear, rotational, and sideways motion, while the unpowered cell moves in response to the combined forces. The Mecanum wheel system allows fine adjustments, ensuring that force vectors align with the system's movement needs. This adaptability is further improved by real-time sensor feedback, which enables ROS2-based corrections for trajectory control and obstacle avoidance. By varying their speed and movement patterns, the robots enable smooth path following, coordinated system rotation, and efficient navigation, making them suitable for advanced robotic applications.
\begin{figure}[h]
\centering
\subfigure[]{\includegraphics[width=0.49\linewidth]{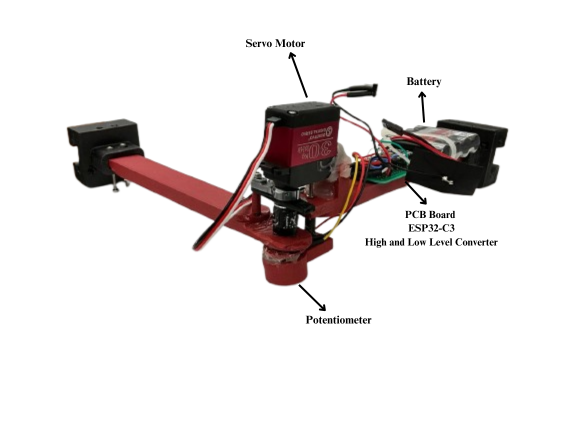}}
\subfigure[]{\includegraphics[width=0.49\linewidth]{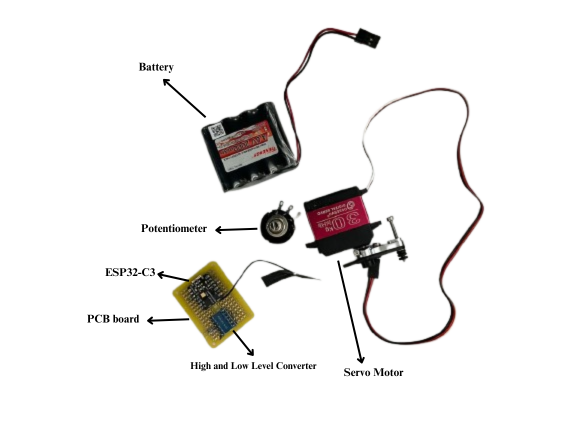}}
\vspace{-0.4cm}
\caption{(a) The servo motor used for controlling the elbow angles. (b) The components of the elbow angle controller.
}
\label{armsservomotor}
\end{figure}
The integration of ROSMASTER X3 with ROS2 ensures scalability and efficient force application, allowing the system to adapt to different conditions. Their precise motor control and ability to move in any direction make them ideal for applications such as indoor navigation, swarm robotics, and industrial automation. The ability to coordinate movement among multiple robots allows real-world deployment in structured environments, ensuring that each robot contributes effectively to the overall movement of the system. By adjusting their forces in real time and communicating efficiently, the ROSMASTER X3 robots form an effective multirobot force generation system capable of executing complex motion strategies with precision.

The controllable servo motor arm is an essential subsystem within the system, designed for precise motion control and position measurement (see Fig. \ref{armsservomotor}). It is powered by an ESP32-C3 microcontroller, which generates PWM signals to control a 30kg digital servo motor, enabling high-torque actuation with controlled movement. A potentiometer is integrated to provide real-time positional feedback, facilitating a closed-loop control mechanism that enhances accuracy and stability. A high and low-level converter ensures proper voltage regulation between the microcontroller and other components, protecting the electronics and optimizing signal transmission. The entire system is powered by a battery pack, ensuring portability and adaptability for various experimental setups. The structural framework is built using 3D-printed and metal components, offering a balance between lightweight flexibility and mechanical durability.

\subsubsection{Software Setup}

The system's software architecture has been meticulously designed to facilitate seamless interaction among the planning module, control algorithms, and hardware components (see Fig. \ref{ATUGV-ROS}). The motion capture system is the principal external sensor for localization, capturing pose data, which is subsequently processed at the ground station. The mediator node, which functions as a central dispatcher, utilizes this processed data, ensuring the planner's load is minimized. The planner generates the desired trajectory, which the mediator thereafter processes. The mediator operates the velocity and arm controller to direct the agents and the arm to the desired location. The velocity controller converts the desired trajectory into velocity commands for the agents, while the arm controller guarantees precise positioning of the arms through motor commands.

Figure \ref{experiment} shows the actual and desired paths of cells $1$ through $4$. The video of our experiment can be found \href{https://www.youtube.com/watch?v=KO81Bpweirc}{here}.

\begin{figure}[ht]
\centering
\includegraphics[width=0.48 \textwidth]{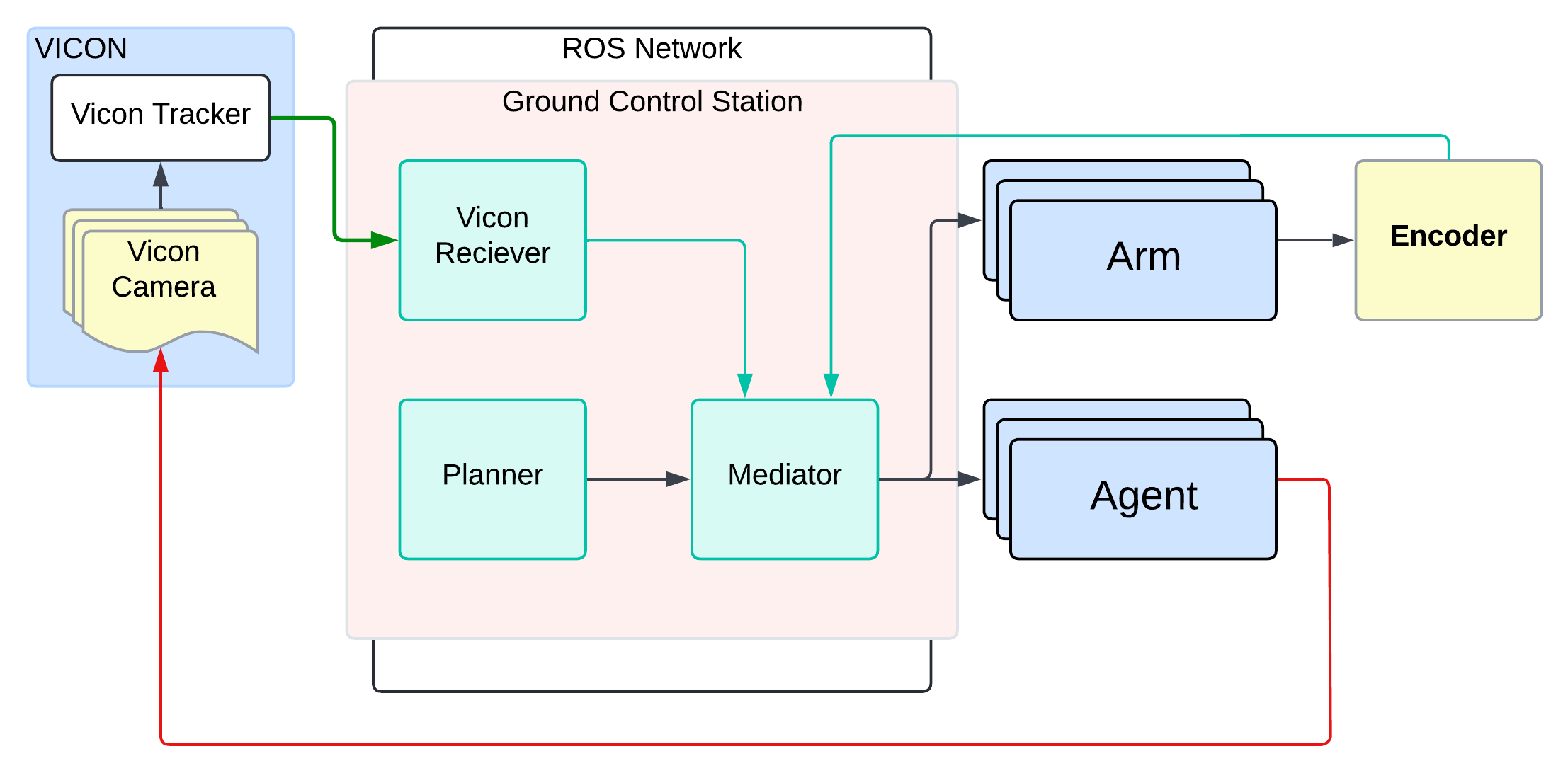}
\caption{Overview of the software stack.}
\label{ATUGV-ROS} 
\end{figure}

\begin{figure}[ht]
\centering
\includegraphics[width=0.48 \textwidth]{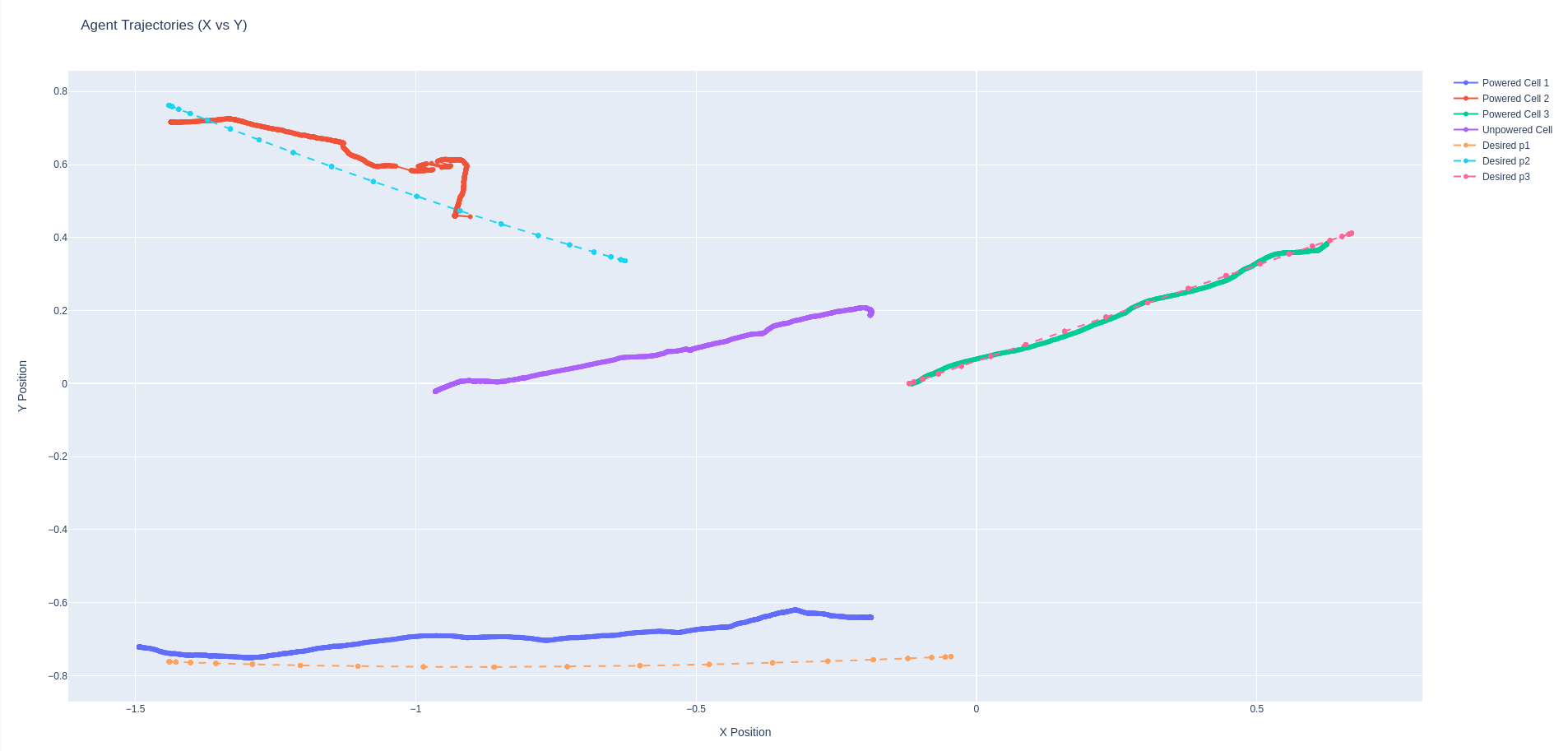}
\caption{Actual and desired path of cells $1$ through $4$ obtained through ATUGV affine transformation experiment.}
\label{experiment} 
\end{figure}


\subsection{Simulation Results}\label{Simulation Results}
For simulation, we evaluated the affine transformation of a seven-cell ATUGV with the reference configuration and interconnection topology shown in Figs. \ref{PUConnections} (a) and (b), respectively. where cells $1$, $2$, $3$, and $4$ are powered and the remaining cells $5$, $6$, and $7$ are unpowered.
Given the final time $t_f=10$, $\lambda_1^f$, $\lambda_2^f$, $d_1^f$, $d_2^f$, $\sigma_r^f$, and $\sigma_d^f$ listed in Table \ref{tab:my_label}, the final configuration of the ATUGV is shown in Fig. \ref{DGUConfigurationSeven}. For simulation, we assume that every powered cell is operated by a double-integrator UGV. The $X$ and $Y$ components of all cell positions are plotted versus time in Fig.  \ref{XY}.


\begin{figure}[ht]
\centering
\includegraphics[width=0.48 \textwidth]{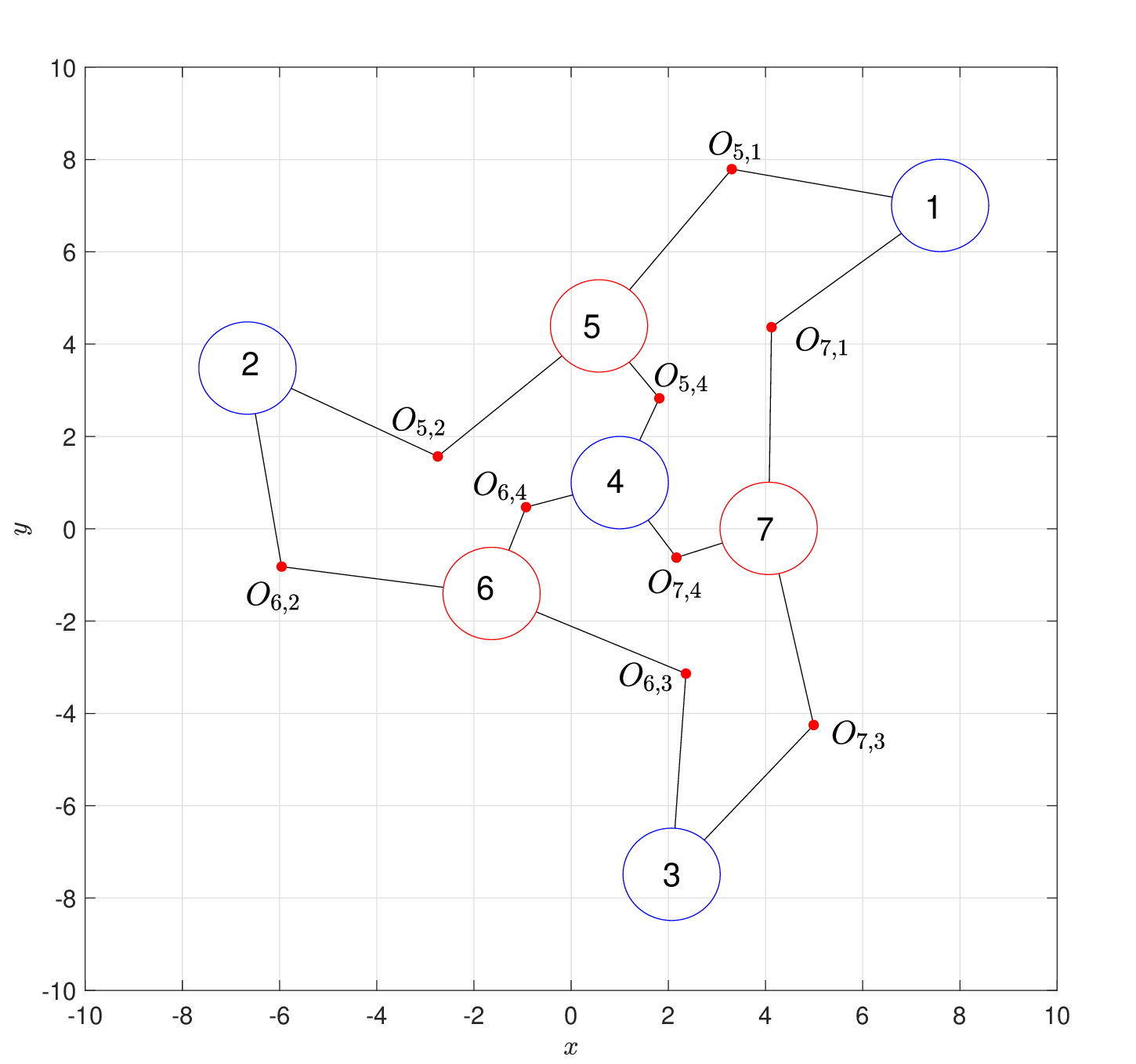}
\caption{Final configuration of the seven-cell aTUGV.}
\label{DGUConfigurationSeven} 
\end{figure}

\begin{figure}[h]
\centering
\subfigure[]{\includegraphics[width=0.98\linewidth]{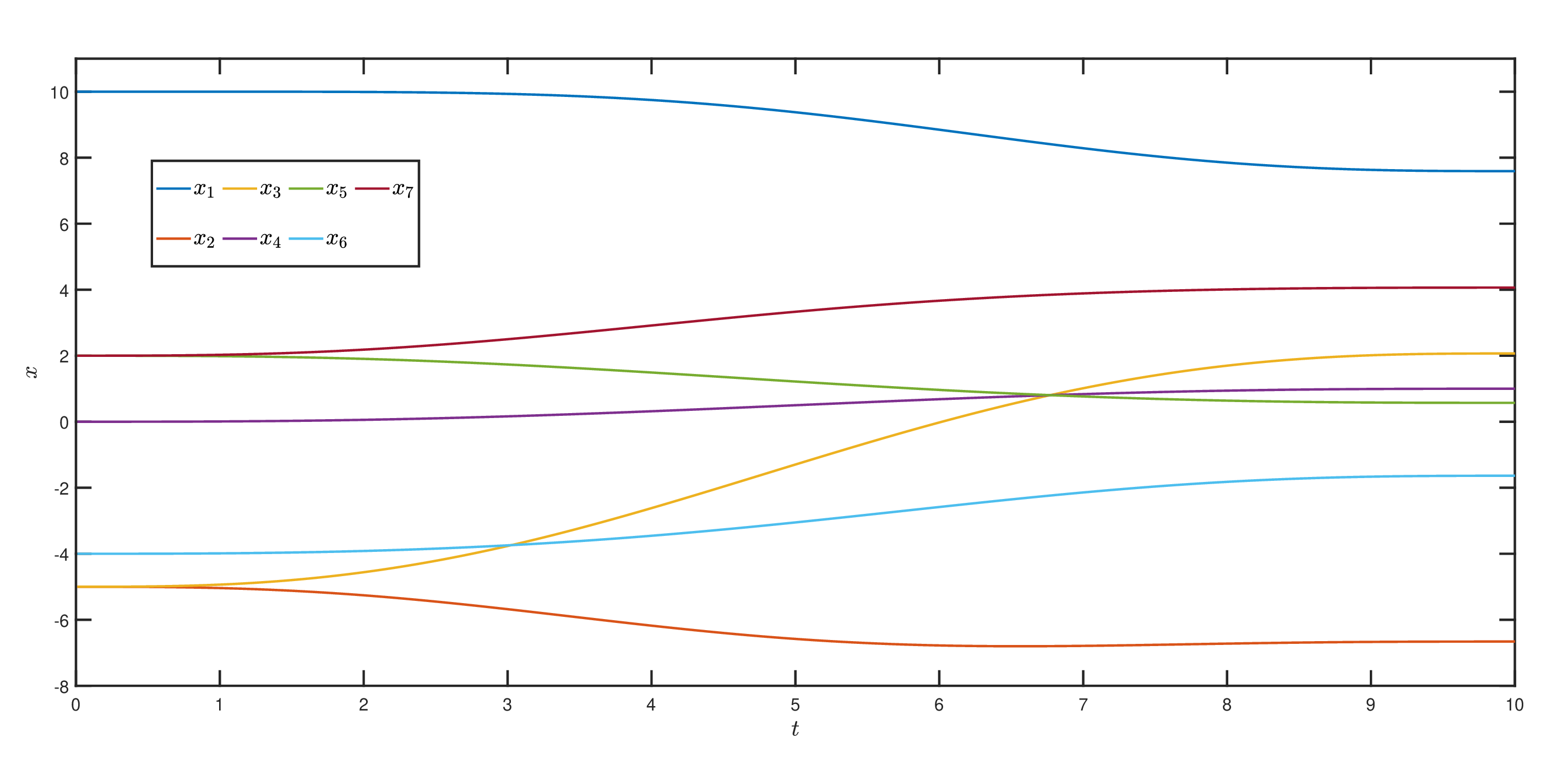}}
\subfigure[]{\includegraphics[width=0.98\linewidth]{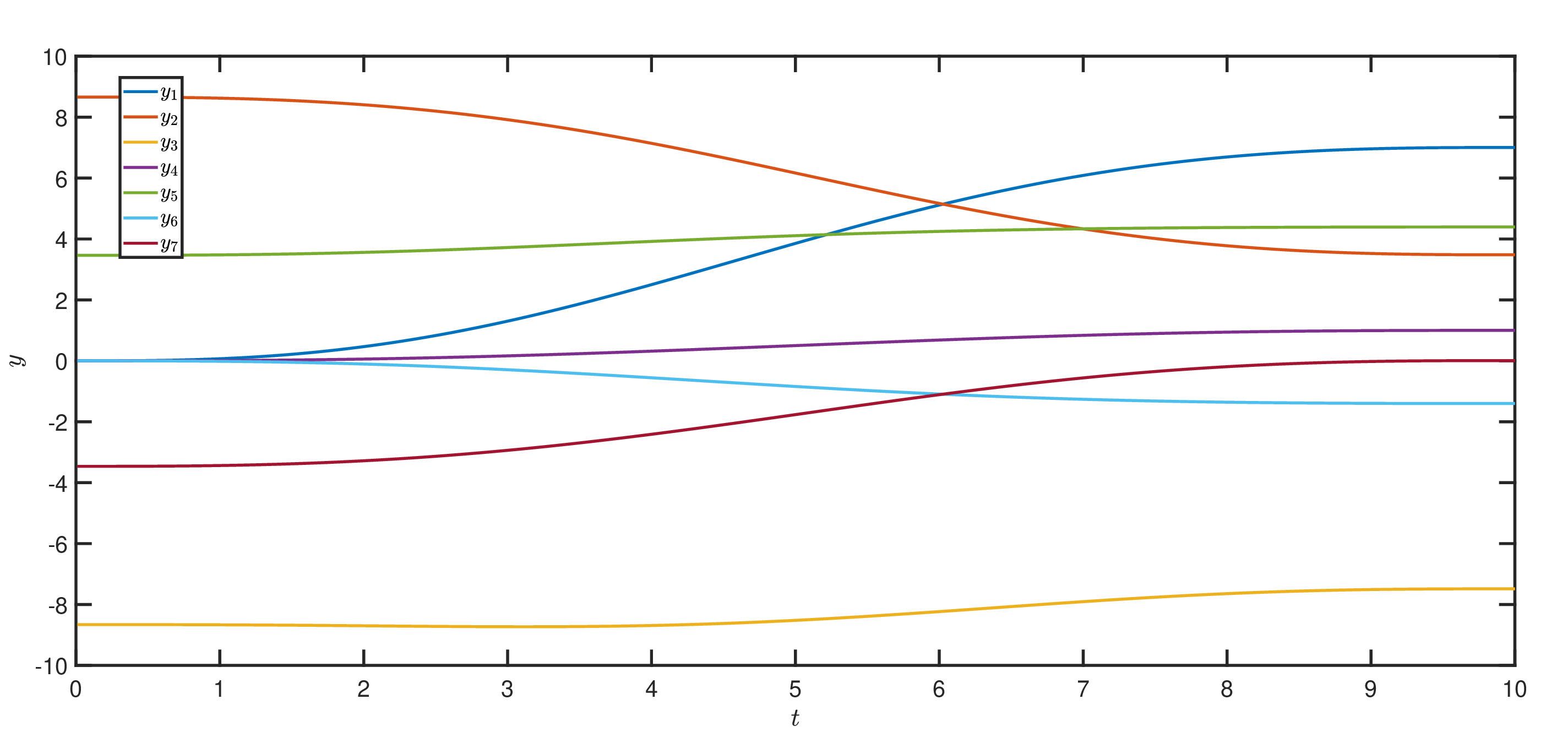}}
\vspace{-0.4cm}
\caption{$x$ and $y$ components of cell positions versus time.
}
\label{XY}
\end{figure}

\section{Conclusion and Future Work}\label{Conclusion}
The paper presents proof of concept for a new morphable mobile robot with capability of resilient and robust operation in a constrained environment. While ATUGV can have an aggressive planar deformation, it will offer structural rigidity in every plane normal to the deformation plane of the vehicle. As a result, the proposed ATUGV should enhance the robustness of operations of existing UGVs against disturbance forces during inclement environmental conditions. Furthermore, the ATUGV will benefit from redundant power due to multiple integrated quadcopters, to advance the resilience of the DCU operation against failure and improve the safety recovery capabilities of UAVs. By carrying mutiple paylaods and offering the capability of reconfiguration, ATUGV can improve the scalability of performance of mobile robots for logistics applications. 

In this paper, we assumed that all mobile robots can localize themselves and know their own desired trajectories. For future work, we relax this constrained  by enabling ATUGV to acquire a desired affine transformation in a decentralized fashion through cooperative localization.
\section{Acknowledgement}
The authors greatfully thank Tech Launch Arizona for supporting this work.

\bibliographystyle{IEEEtran}
\bibliography{reference}


%

\appendices




\ifCLASSOPTIONcaptionsoff
  \newpage
\fi

\end{document}